\newtheorem{theorem}{Theorem}[section]
\newtheorem{corollary}{Corollary}[theorem]
\newtheorem{lemma}[theorem]{Lemma}
\newtheorem{proposition}[theorem]{Proposition}
\definecolor{baselinegray}{gray}{0.92}
\definecolor{oursgreen}{RGB}{200,255,200}
\definecolor{mintgreen}{RGB}{189, 226, 213}
\title{FairContrast: Enhancing Fairness through Contrastive learning and Customized Augmenting Methods on Tabular Data}
\author{%
  Aida Tayebi \thanks{Equal contribution} \\
  Department of Industrial Engineering\\
  University of Central Florida\\
  Orlando, FL 32816 \\
  \texttt{ai530737@ucf.edu} \\
  \And
  Ali Khodabandeh Yalabadi \footnotemark[1] \\
  Department of Industrial Engineering\\
  University of Central Florida\\
  Orlando, FL 32816 \\
  \texttt{yalabadi@ucf.edu} \\
  \AND
  Mehdi Yazdani-Jahromi \\
  Department of Computer Science\\
  University of Central Florida\\
  Orlando, FL 32816 \\
  \texttt{yazdani@ucf.edu} \\
  \And
  Ozlem Ozmen Garibay \\
  Department of Industrial Engineering\\
  University of Central Florida\\
  Orlando, FL 32816 \\
  \texttt{ozlem@ucf.edu} \\
}
\begin{document}

\maketitle

\begin{abstract}
  As AI systems become more embedded in everyday life, the development of fair and unbiased models becomes more critical. Considering the social impact of AI systems is not merely a technical challenge but a moral imperative. As evidenced in numerous research studies, learning fair and robust representations has proven to be a powerful approach to effectively debiasing algorithms and improving fairness while maintaining essential information for prediction tasks. Representation learning frameworks, particularly those that utilize self-supervised and contrastive learning, have demonstrated superior robustness and generalizability across various domains. Despite the growing interest in applying these approaches to tabular data, the issue of fairness in these learned representations remains underexplored. In this study, we introduce a contrastive learning framework specifically designed to address bias and learn fair representations in tabular datasets. By strategically selecting positive pair samples and employing supervised and self-supervised contrastive learning, we significantly reduce bias compared to existing state-of-the-art contrastive learning models for tabular data. Our results demonstrate the efficacy of our approach in mitigating bias with minimum trade-off in accuracy and leveraging the learned fair representations in various downstream tasks.
\end{abstract}
\section{Introduction}
The real-world application of deep learning approaches is expanding rapidly. These approaches are susceptible to stereotypes or societal biases inherited in the data, resulting in models biased against individuals with specific sensitive attributes and treating them unfairly. Well-known examples include a recidivism risk prediction model that predicts reoffending rates for individuals of certain races at twice the rate compared to others \cite{angwin2022machine,chouldechova2017fair,pessach2023algorithmic} , or recruitment models showing bias towards male candidates over equally qualified female candidates\cite{lambrecht2019algorithmic,datta2014automated,dastin2022amazon}. Consequently, there is an increasing amount of research focused on algorithmic fairness, with the primary objective being to guarantee that sensitive attributes have no impact on algorithmic outcomes. 

Learning fair and robust representations has shown its potential in effectively debiasing and improving fairness while keeping the essential information for the prediction task \cite{yazdani2024fair}. In representation learning frameworks, self-supervised learning and particularly contrastive learning have shown superior robustness and generalizability across various domains, including natural language processing (NLP) and computer vision, even in scenarios with fully labeled or few labeled data \cite{balestriero2023cookbook}. Contrastive learning is more challenging when applied to tabular datasets compared to other data types such as images, text, or speech. This is because these datasets typically contain spatial, semantic, and vocal relationships, which provide structured information not present in tabular data. Although contrastive learning for tabular data is receiving increased attention, the fairness in these learned representations has not been thoroughly explored. Our particular interest lies in investigating whether contrastive learning methods can be utilized to mitigate bias and improve the fairness of representations. We argue that other contrastive learning models for tabular data, such as VIME \cite{yoon2020vime} and SCARF \cite{bahri2021scarf}, which do not address fairness issues, exhibit bias in their predictions in downstream tasks, leading to discrimination.	

In this study, we propose a contrastive learning framework for tabular dataset, to mitigate bias and learn fair representations. For positive pairs, each privileged sample with favorable outcome is paired with one randomly selected unprivileged sample with favorable outcome. Other samples are paired with one randomly selected sample with the same class and the same sensitive attribute. In our supervised and unsupervised framework, supervised contrastive learning loss \cite{khosla2020supervised} and InfoNCE loss \cite{gutmann2010noise,oord2018representation} combined with binary cross-entropy loss are used, respectively.
Our proposed method is evaluated on various prevalent datasets in the fairness domain. The results demonstrate a significant reduction in bias compared to existing state-of-the-art contrastive learning frameworks for tabular datasets. Moreover, our framework results in learning fair representations that can be utilized in any downstream tasks.   

\section{Related Work}
\textbf{Self-Supervised Learning for tabular data.} 
As access to unlabeled data expands, self-supervised learning (SSL) has received attention across various domains, including natural language processing (NLP), computer vision, and speech recognition. SSL approaches are representation learning frameworks that use unlabeled data to learn robust and meaningful representations. SSL also demonstrates its ability in robustness and generalizability even in scenarios with fully labeled or few labeled data \cite{balestriero2023cookbook}. SSL methods highly depend on the correlations within the features of the data. Recently, there has been an increasing focus within the representation learning field on employing SSL for tabular data. Unlike other data types such as images, text, or speech, which possess distinct structures such as spatial, semantic, and vocal relationships respectively, tabular data are more challenging \cite{wang2024survey}. This is primarily due to the absence of explicit relationships for learning representation, which also varies across different tabular datasets\cite{wang2024survey}. 

Several studies \cite{arik2021tabnet,somepalli2021saint,bahri2021scarf,yoon2020vime,chen2023recontab,hajiramezanali2022stab,huang2020tabtransformer,ucar2021subtab,wang2022transtab} have been proposed that deploy SSL methods in tabular datasets. These approaches can be categorized to two groups : 1. employing the pretext tasks and 2. contrastive learning. Deploying the pretext task is the most widely-used category. \cite{yoon2020vime} proposes Value Imputation and Mask Estimation(VIME), a self and semi supervised framework for tabular data. In the self supervised framework, they pre-trained an encoder on unlabeled corrupted/masked data and extracted the features. Then these representations are passed through “mask estimation” and “feature estimation” heads for recovering the binary mask used for corruption, and the value of the feature that has been masked. This pre-trained encoder is utilized in the semi-supervised learning framework as well. Their choice of the corruption strategy is to choose a mask randomly sampled from a Bernoulli distribution. For the filling strategy they utilize CutMix \cite{yun2019cutmix},  which replaces all masked values of each sample with values from another randomly selected sample. VIME achieved state-of-the-art on clinical and genomics datasets. On the other hand, the main concept of contrastive learning is to pull similar instances (positive pairs) together and push dissimilar instances (negative pairs) away. This is achieved by maximizing agreement (or minimizing the distance) within the embedding space of positive pairs while maximizing the distances with negative pairs. \cite{chen2020simple,tian2020contrastive,oord2018representation,grill2020bootstrap}. These methods have been very successful in computer vision. In data types such as images, the positive pairs can be generated through a data augmentation module (i.e., through transformation of the image such as cropping and resizing, rotation, color dissertation, etc.), while negative pairs correspond to other images in the batch \cite{chen2020simple}. More recently the scope of contrastive learning has been extended to weakly supervised \cite{tsai2021integrating}, semi-supervised, and supervised setup. These studies have introduced an extra conditional variable such as details about the downstream task \cite{tian2020makes} or labels from downstream tasks \cite{khosla2020supervised,kang2020contragan}, to improve the quality of the representations. In the supervised contrastive learning setup, the positive pairs belong to the same class while negative pairs belong to different classes \cite{khosla2020supervised}. In the NLP domain, it has been demonstrated that the model’s robustness to noise and data sparsity can be improved when supervised contrastive learning is combined with a cross entropy loss \cite{gunel2020supervised,shen2021contrastive}. SCARF proposed by \cite{bahri2021scarf} is another state-of-the-art approach that deploys contrastive learning on tabular data. In the SSL setting Scarf method masks 60\% of the features for each data point and then uses Random Feature Corruption to replace these masked values. Finally they fine-tune their encoder weights with a classification head on top through some labeled data. The authors of this study claim the contrastive learning setting of SCARF is superior compared to pre-trained models such as VIME. They compared their methods with other random feature corruption methods, such as CutMix \cite{yun2019cutmix} and MixUp \cite{zhang2017mixup} and argue that their proposed random feature corruption method is more effective. As mentioned earlier, CutMix \cite{yun2019cutmix} replaces values from one randomly chosen sample for all of the masked values in each sample, whereas SCARF method randomly selects a sample for each masked value. In addition, the corruption method of MixUp \cite{zhang2017mixup} is to replace it with a linear combination of the sample and another randomly chosen sample. It is demonstrated that MixUp is more effective when corruption is done on the embedding space rather than input space \cite{somepalli2021saint}.

\textbf{Fair Contrastive learning}
Contrastive learning has been extensively used to address fairness concerns in the field of computer vision \cite{zhang2022fairness,hong2021unbiased,barbano2022unbiased,tsai2022conditional}. In contrast, while the application of self-supervised contrastive learning to tabular data is gaining attention, its use for fairness-specific objectives remains comparatively underexplored. \cite{ma2021conditional} introduces a conditional contrastive learning (CCL) approach primarily designed for vision tasks, which they also extend to tabular data. Their method selects positive and negative pairs conditioned on the sensitive attribute to minimize sensitive leakage while improving class separability. However, their augmentation strategy, adding isotropic Gaussian noise to standardized tabular features, originates from vision-based contrastive frameworks and is less well-suited for tabular data. Many tabular features are discrete, semantically structured, or non-continuous, making Gaussian perturbations potentially unrealistic or semantically meaningless. In the domains of NLP and computer vision, \cite{shen2021contrastive} proposed a contrastive learning-based method for bias mitigation that encourages representations of samples with the same class label to be close, while pushing apart representations that share the same protected attribute. Although developed for unstructured data, the underlying principle of this approach, decoupling class and sensitive attribute representations, is also applicable to tabular data. \cite{chakraborty2020fairmixrep} employs self-supervised setting on tabular data  in an encoder- decoder framework and discusses fairness; however, they do not utilize contrastive learning. DualFair \cite{han2023dualfair} presents a self-supervised representation learning framework that jointly addresses both group fairness and counterfactual fairness. It achieves this by generating counterfactual samples using a cyclic variational autoencoder (C-VAE), applying fairness-aware contrastive loss to align embeddings across sensitive groups and counterfactuals, and using self-knowledge distillation to maintain representation quality.
\section{Method}
Figure \ref{fig:method} illustrates the general framework of our proposed model. A customized technique is used to selectively pair positive samples from the original input based on specific criteria. These positive pairs are integrated into the training process using a contrastive loss, alongside classification tasks, in an end-to-end manner.

The selection of negative pairs in our contrastive learning framework depends on whether the setting is supervised or self-supervised. In the supervised version, negative pairs within a batch are formed from samples of different classes. In the self-supervised version, all other samples in the batch are treated as negatives.

For positive pairs, instances from the same class are further conditioned on the sensitive attribute. That is, we sample pair instances within the same subgroup (defined by outcome * sensitive attribute), such as Female with low income paired with another Female with low income. The only exception is for the privileged group with a favorable outcome. This design helps preserve subgroup-specific characteristics in the learned representations, ensuring the model remains accurate within subgroups while maintaining clear separation between them.

For the privileged group with favorable outcomes, positive pairs are drawn from the unprivileged group with the same favorable outcome, as the privileged group poses challenges to the classifier's ability to ensure fair predictions. For example, we pair a Male high income instance with a Female high income instance. Keeping them in the same class (high income) ensures that the model learns to minimize representational distance between them. This approach encourages the model not to rely on the sensitive attribute in favorable outcome predictions, aligning with the fairness criterion of equality of opportunity by promoting intra-group similarity across sensitive attributes.

The general idea behind contrastive learning is to train a model to bring similar samples closer together in a learned representation space while pushing dissimilar samples apart. Our strategy embeds fairness into the model's learning process without altering the core contrastive learning mechanism. By encouraging instances from favorable outcomes with different sensitive attributes to be closer in representation space, we naturally achieve fairness goals without relying on additional fairness-specific constraint-based loss functions. Incorporating our custom sampling strategy and optimizing with contrastive loss encourages the embeddings of selected groups to converge, reducing bias and mitigating discrimination while preserving model utility.

To implement this approach, our architecture includes an encoder $z = Enc(x)$, which maps the input to a representation. These representations are then used to calculate both the contrastive loss and the classification loss. Within this framework, we investigate a range of contrastive loss functions:

\begin{itemize}
    \item \textbf{Self-supervised contrastive loss:}
    Self-supervised contrastive learning does not require explicit labels for training. Instead, it leverages positive and negative pairs of the data sample. In this context, given a mini-batch with a set of N randomly selected samples, let $i \in \{1...N\}$ be the index of an arbitrary sample, called the \textit{anchor} and let j be the index of random augmentations (a.k.a., “views”), also called the \textit{positive}, the corresponding mini-batch consists of $2N$ pairs where the other $2(N-1)$ indices $\{1...N\} \setminus \{  i\}$ are called the \textit{negatives}. Here $z_i = Enc(x_i)$, $\tilde{z}_j = Enc(\tilde{x}_j)$ denotes the embeddings generated from the encoder and the self-supervised contrastive loss is calculated as \cite{chen2020simple,tian2020contrastive,hjelm2018learning}:
    \begin{equation}
    \label{eq:selfcon}
        {L}^{self} = -\frac{1}{N} \sum_{i=1}^{N} \log  \frac{\exp(\text{sim}(z_i, z_j) / \tau)}{\sum_{k\not\equiv i} \exp(\text{sim}(z_i, z_k) / \tau)} 
    \end{equation}

    where \textbf{sim} function is the Cosine Similarity (Eq. \ref{eq:cosine}).

    \begin{equation}
    \label{eq:cosine}
        \text{sim}(z_i, z_j) = \frac{z_i^T \tilde{z}_j}{\|z_i\|_2 . \|\tilde{z}_j\|_2} 
    \end{equation}    
 $\tau \in R^+$ is a scalar temperature parameter controlling softness.

    \item \textbf{Supervised contrastive loss:}
    In the realm of supervised learning, the contrastive loss outlined in Eq. \ref{eq:selfcon} encounters limitations when multiple samples are known to belong to the same class \cite{khosla2020supervised}. Eq. \ref{eq:supcon} presents the most direct approaches for extending Eq. \ref{eq:selfcon} to include supervision \cite{khosla2020supervised}:

    \begin{equation}
    \label{eq:supcon}
        {L}^{sup} = -\frac{1}{N} \sum_{k=1}^{N} \frac{1}{|P(i)|} \sum_{p \in P(i)} \log  \frac{\exp(z_i \cdot z_p / \tau)}{\sum_{q \in Q(i)} \exp(z_i \cdot z_k / \tau)} 
    \end{equation}

    where the symbol $\cdotp$ denotes the inner (dot) product and $Q(i)\equiv \{1...N\} \setminus \{  i\}$ , $P(i) \equiv \left\{p \in Q(i): y_p = y_i\right\}$ is the set of indices of all positives in the batch distinct from $i$, and $|P(i)|$ is its cardinality.
\end{itemize}
As shown in Figure \ref{fig:method}, our final objective function is formulated as a weighted combination of a binary cross-entropy loss and contrastive loss.

\begin{equation}
L_{total}=\alpha L_{BCE} +  L_{SCL}
\end{equation}

\begin{figure*}[t]
        \centering
    \includegraphics[width=\textwidth]{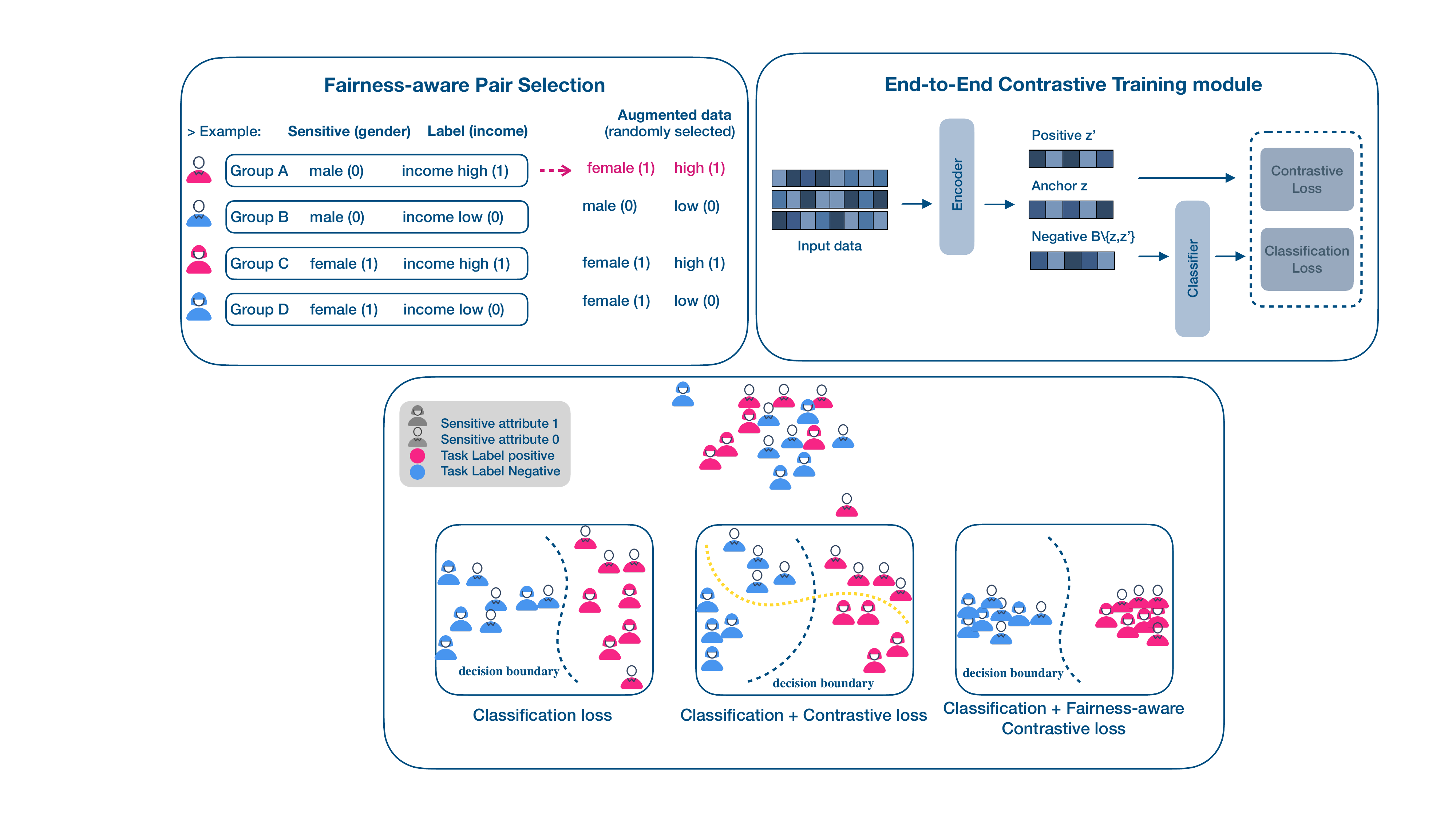}
    \caption{The schematic diagram illustrates the proposed fairness-aware contrastive learning framework. Our approach involves selectively sampling positive pairs based on specific criteria and integrating them into the training process with a contrastive loss in an end-to-end manner. Although combining supervised contrastive learning with cross-entropy loss improves model robustness, contrastive loss without explicit bias mitigation can unintentionally separate instances across sensitive attributes in the representation space. Our proposed fairness-aware contrastive loss, together with cross-entropy, reduces this separation by bringing positive-class instances from different sensitive groups closer, thereby improving fairness without requiring additional fairness-specific constraint loss functions.}
    \label{fig:method}
\end{figure*}
\subsection{Theoretical Analysis}\label{sec:info_theory}
Let $(X,Y,S)\sim p_{\text{data}}$, where
\(X\in\mathcal X\subset\mathbb R^{d_x}\) are features,
\(Y\in\{0,1\}\) is the target, and
\(S\in\{0,1\}\) is a binary sensitive attribute.
An encoder \(f_\theta:\mathcal X\!\to\!\mathcal Z\subset\mathbb R^{d_z}\)
maps a sample to a representation \(Z=f_\theta(X)\).
Similarity between two representations is measured by
\(g_\tau(z,z')=\exp\bigl(\langle z,z'\rangle/\tau\bigr)\)
with temperature \(\tau>0\).

\vspace{1ex}
\noindent\textbf{Positive-pair sampler.}
Given an anchor sample \((x,y,s)\) with \(y=1\) (favourable label),
we draw the positive according to the mixture
\begin{equation}
    p_{\text{pos}}= \pi\,p_{\text{cross}} + (1-\pi)\,p_{\text{within}},
\label{eq:pos_mixture}
\end{equation}

where
\begin{itemize}
\item \(p_{\text{cross}}\bigl((x,y,s),(x^{+},y^{+},s^{+})\bigr)\) places
      probability mass only on pairs satisfying
      \(y^{+}=y=1\) and \(s^{+}\neq s\);
\item \(p_{\text{within}}\) places mass on pairs with \(y^{+}=y=1\) \emph{and}
      \(s^{+}=s\).
\end{itemize}
Consequently, \(\pi=\Pr[S^{+}\neq S\mid Y=1]\) is determined entirely by the data.

\vspace{1ex}
\noindent\textbf{Contrastive loss.}
With one positive and \(K\) negatives,
the InfoNCE loss is
\begin{equation}
    \mathcal L_{\mathrm{NCE}}(\theta)
=-
\mathbb E_{(x,x^{+})\sim p_{\text{pos}}}
\Bigl[
\log
\frac{g_\tau(z,z^{+})}
     {g_\tau(z,z^{+}) + \sum_{k=1}^{K} g_\tau(z,z^{-}_k)}
\Bigr].
\label{eq:nce}
\end{equation}
Let $C\in\{\text{cross},\text{within}\}$ be the indicator
\begin{equation}
\begin{aligned}
C &= \begin{cases}
\text{cross}   &\text{if } S^{+}\neq S,\\
\text{within}  &\text{if } S^{+}=S
\end{cases} \\
&\Rightarrow\quad
\Pr[C=\text{cross}] = \pi,\quad
\Pr[C=\text{within}] = 1 - \pi.
\end{aligned}
\end{equation}

\begin{lemma}[InfoNCE lower bound {\cite{oord2018representation}}]\label{lem:infonce_bound}
For any encoder \(f_\theta\) and any positive-pair distribution,

\begin{equation}
\begin{aligned}
-\mathcal{L}_{\mathrm{NCE}}(\theta) + \log K
&= \underbrace{I(Z; Z^{+})}_{\text{MI of pairs}}
\;-\;
\underbrace{D_{\mathrm{KL}}\bigl(p_{Z,Z^{+}} \,\|\, p_Z p_{Z^{+}}\bigr)}_{\ge 0} \\
&\le I(Z; Z^{+}).
\end{aligned}
\end{equation}

Thus minimising \(\mathcal L_{\mathrm{NCE}}\) maximises the mutual information \(I(Z;Z^{+})\).
\end{lemma}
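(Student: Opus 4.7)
My plan is to follow the classical density-ratio argument of \cite{oord2018representation}. The InfoNCE objective in \eqref{eq:nce} is the cross-entropy of a $(K+1)$-way classification task: given one positive drawn from the conditional $p(z^+\mid z)$ and $K$ negatives drawn iid from the marginal $p(z^-)$, a softmax classifier with scores $g(z,\cdot)$ must identify the positive index. The first step is therefore to compute the Bayes-optimal critic for this auxiliary task: by Bayes' rule the posterior over the positive index is proportional to $p(z_i\mid z)/p(z_i)$, so the minimiser of $\mathcal L_{\mathrm{NCE}}$ over all positive critics is the density ratio $g^{*}(z,z')=p(z'\mid z)/p(z')$, and for any critic $g_\tau$ one has $\mathcal L_{\mathrm{NCE}}(g_\tau)\ge \mathcal L_{\mathrm{NCE}}(g^{*})$.

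Substituting $g^{*}$ into \eqref{eq:nce} and factoring the positive term out of the denominator gives
\[
\mathcal L_{\mathrm{NCE}}(g^{*})
=\mathbb E\!\left[\log\!\left(1+\frac{p(z^+)}{p(z^+\mid z)}\sum_{k=1}^{K}\frac{p(z_k^-\mid z)}{p(z_k^-)}\right)\right].
\]
From here the target bound is obtained in three sub-steps: (i) use $\log(1+u)\ge\log u$ to drop the $+1$ inside the logarithm; (ii) split the resulting logarithm into a density-ratio piece $\log\tfrac{p(z^+)}{p(z^+\mid z)}$, whose expectation is exactly $-I(Z;Z^+)$, and a negatives piece $\log\sum_k\tfrac{p(z_k^-\mid z)}{p(z_k^-)}$; and (iii) show that the expectation of the negatives piece is at least $\log K$, using that each ratio $Y_k=p(z_k^-\mid z)/p(z_k^-)$ has unit mean under $p(z')$ and hence the sum has mean exactly $K$. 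Combining these with the optimality chain $\mathcal L_{\mathrm{NCE}}(g_\tau)\ge \mathcal L_{\mathrm{NCE}}(g^{*})\ge \log K - I(Z;Z^+)$ then yields $-\mathcal L_{\mathrm{NCE}}(g_\tau)+\log K\le I(Z;Z^+)$.

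The step I expect to be the main obstacle is step (iii): Jensen's inequality applied directly to $\log\sum_k Y_k$ runs in the \emph{wrong} direction, since concavity of $\log$ gives $\mathbb E[\log\sum_k Y_k]\le \log\mathbb E[\sum_k Y_k]=\log K$ rather than the $\ge$ one needs for finite $K$. The cleanest rigorous route---and the one that also justifies writing the gap between $-\mathcal L_{\mathrm{NCE}}+\log K$ and $I(Z;Z^+)$ as a non-negative KL-type divergence as stated in the lemma---is the multi-sample variational argument of Poole et al.\ via the Donsker--Varadhan representation of KL divergence. That construction expresses the slack explicitly as a non-negative KL divergence between the true joint sampling distribution of the $(K+1)$-tuple $(Z,Z^+,Z^-_{1:K})$ and an appropriate product-of-marginals proposal; non-negativity of this KL then immediately yields the one-sided inequality in the lemma and formalises the statement that minimising $\mathcal L_{\mathrm{NCE}}$ maximises a lower bound on $I(Z;Z^+)$.
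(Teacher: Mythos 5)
The paper does not actually prove this lemma --- it is stated as an imported result with a citation to Oord et al., and no \verb|proof| environment follows it --- so there is nothing internal to compare your argument against; the relevant comparison is with the cited literature. Your reconstruction of the standard argument is accurate as far as it goes: the Bayes-optimal critic for the $(K{+}1)$-way identification task is the density ratio, $\mathcal L_{\mathrm{NCE}}(g_\tau)\ge\mathcal L_{\mathrm{NCE}}(g^{*})$ handles the "any encoder" clause, and your diagnosis of step (iii) is exactly right: Jensen gives $\mathbb E\bigl[\log\sum_k Y_k\bigr]\le\log K$, the wrong direction, so the folklore derivation in Oord et al.\ is an approximation ($\sum_k Y_k\approx K$) rather than a proof, and your primary chain (i)--(iii) does not close. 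Your proposed repair --- the multi-sample variational / Donsker--Varadhan route of Poole et al., which expresses the slack as a genuine non-negative KL between the true sampling distribution of the tuple and a product-of-marginals proposal --- is the correct and standard rigorous fix, though you only name it rather than execute it, so what you have is a sound plan with the hard step deferred to a citation.

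One further point your DV route surfaces, and which is worth flagging: the "equality" displayed in the lemma cannot be right as written, because $D_{\mathrm{KL}}\bigl(p_{Z,Z^{+}}\,\|\,p_Z p_{Z^{+}}\bigr)$ \emph{is} $I(Z;Z^{+})$ by definition, which would force $-\mathcal L_{\mathrm{NCE}}+\log K=0$. The residual term in a correct decomposition is a different KL --- e.g.\ the divergence between the Bayes posterior over the positive index and the model-induced softmax posterior, plus the multi-sample DV gap --- and your construction identifies the right object. So your treatment is, if anything, more careful than the paper's; to make it a complete proof you would need to write out the multi-sample NWJ/DV bound explicitly rather than gesturing at it.
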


\paragraph{Assumptions (for this proposition only).}
\begin{enumerate}
\item\label{a:markov}
      (\emph{Pair-wise Markov property})
      \[
      Z\mathbin{\perp\!\!\!\!\perp}Z^{+}\;\bigm|\;
      \begin{cases}
        Y                         &\text{if } C=\text{cross},\\
        (Y,S)                     &\text{if } C=\text{within}.
      \end{cases}
      \]
\item\label{a:C_function}
      $C$ is a \emph{deterministic} function of $(S,S^{+})$;
      hence \(C \mathbin{\perp\!\!\!\!\perp} Z \mid (Y,S)\)
      and \(C \mathbin{\perp\!\!\!\!\perp} Z^{+} \mid (Y,S)\).
\end{enumerate}

\begin{proposition}[Mutual-information decomposition]\label{prop:mi_decomp}
Under Assumptions~\ref{a:markov}–\ref{a:C_function},
\begin{equation}\label{eq:mi_decomp}
I(Z;Z^{+})
\;=\;
I(Z;Y)\;+\;(1-\pi)\,I(Z;S\mid Y).
\end{equation}
\end{proposition}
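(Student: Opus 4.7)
The plan is to condition on the pair indicator $C$ and use Assumption~\ref{a:markov} to collapse each stratum onto a shared sufficient statistic.

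First I would reduce $I(Z;Z^{+})$ to a $C$-weighted sum of conditional MIs. Assumption~\ref{a:C_function} gives $C\perp Z\mid(Y,S)$, and since the sampler sets $\Pr[C{=}\mathrm{cross}\mid Y,S]=\pi$ independently of $S$, marginalising $S$ gives $C\perp Z\mid Y$ (hence $I(Z;C)=0$). After a symmetric argument eliminates $I(Z;C\mid Z^{+})$, the interaction $I(Z;Z^{+};C)$ vanishes and one obtains
\[
I(Z;Z^{+}) \;=\; \pi\,I(Z;Z^{+}\mid C{=}\mathrm{cross}) \;+\; (1-\pi)\,I(Z;Z^{+}\mid C{=}\mathrm{within}).
\]

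Next I would evaluate each stratum using Assumption~\ref{a:markov}. For $C{=}\mathrm{cross}$ the Markov chain $Z\to Y\to Z^{+}$ identifies the pair information with the shared label, giving $I(Z;Z^{+}\mid C{=}\mathrm{cross})=I(Z;Y)$. For $C{=}\mathrm{within}$ the sampler forces $S^{+}{=}S$ and the Markov chain upgrades to $Z\to(Y,S)\to Z^{+}$, giving $I(Z;Z^{+}\mid C{=}\mathrm{within})=I(Z;Y,S)$. Substituting into the decomposition above and applying the chain rule $I(Z;Y,S)=I(Z;Y)+I(Z;S\mid Y)$ then yields
\[
I(Z;Z^{+}) \;=\; \pi\,I(Z;Y) + (1-\pi)\bigl(I(Z;Y)+I(Z;S\mid Y)\bigr) \;=\; I(Z;Y) + (1-\pi)\,I(Z;S\mid Y).
\]

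The main obstacle is upgrading each stratum identity $I(Z;Z^{+}\mid C{=}c)=I(Z;T_{c})$ (with $T_{c}$ the shared statistic, $Y$ or $(Y,S)$) from the generic data-processing inequality to an equality. Assumption~\ref{a:markov} supplies the decomposition $I(Z;Z^{+}\mid C{=}c)=I(Z;T_{c}\mid C{=}c)-I(Z;T_{c}\mid Z^{+},C{=}c)$, and the vanishing of the residual requires $Z^{+}$ to be sufficient for $T_{c}$—an idealisation consistent with the InfoNCE saturation already invoked in Lemma~\ref{lem:infonce_bound}, and one that I would justify by the exchangeability of anchor and positive conditional on $T_{c}$ under the pair sampler. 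The same sufficiency condition underlies the elimination of $I(Z;C\mid Z^{+})$ in the first step, so both reductions must be handled together.
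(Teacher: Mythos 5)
Your proof follows essentially the same route as the paper's: decompose $I(Z;Z^{+})$ over the indicator $C$, use Assumption~\ref{a:C_function} to kill $I(Z;C)$ and $I(Z;C\mid Z^{+})$, collapse the cross stratum onto $Y$ and the within stratum onto $(Y,S)$ via Assumption~\ref{a:markov}, drop the $C$-conditioning, and finish with the chain rule $I(Z;Y,S)=I(Z;Y)+I(Z;S\mid Y)$. The sufficiency caveat you flag is a genuine one --- $Z \mathbin{\perp\!\!\!\!\perp} Z^{+}\mid T_c$ only yields $I(Z;Z^{+}\mid C{=}c)=I(Z;T_c\mid C{=}c)-I(Z;T_c\mid Z^{+},C{=}c)\le I(Z;T_c\mid C{=}c)$, so equality needs the residual term to vanish --- but the paper's own proof silently assumes this by asserting the stratum identities (a) and (b) follow from ``the chain rule'' alone, so your version is, if anything, the more careful rendering of the same argument.
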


\begin{proof}[Proof (chain rule only)]
Start from the law of total expectation for mutual information:

\begin{equation}\label{eq:mi_total}
I(Z;Z^{+})
=
\underbrace{I(Z;Z^{+}\mid C)}_{\text{case analysis}}
\;+\;\underbrace{I(Z;C)}_{\text{=0 by (Assumption \ref{a:C_function})}}
\;-\;\underbrace{I(Z;C\mid Z^{+})}_{\text{=0 by (Assumption \ref{a:C_function})}}.
\end{equation}
Because $I(Z;C)=0$ and $I(Z;C\mid Z^{+})=0$, only the first term remains.  
Expand it with the definition of conditional mutual information:
\begin{equation}
\begin{aligned}
I(Z; Z^{+} \mid C)
&= \pi \cdot I(Z; Z^{+} \mid C = \text{cross}) \\
&\quad + (1 - \pi) \cdot I(Z; Z^{+} \mid C = \text{within}).
\end{aligned}
\end{equation}

\vspace{0.5ex}
\noindent the cross-$S$ branch:\\
When \(C=\text{cross}\), Assumption \ref{a:markov} gives the Markov chain
\(Z \,\mathbin{\perp\!\!\!\!\perp}\, Z^{+}\mid Y\).  
By the chain rule,
\[
I(Z;Z^{+}\mid C=\text{cross})
= I(Z;Y\mid C=\text{cross})
\quad (Z\!\mathbin{\perp\!\!\!\!\perp}\!Z^{+}\mid Y).
\tag{a}
\]

\vspace{0.5ex}
\noindent the within-$S$ branch:\\
When \(C=\text{within}\), the Markov chain is
\(Z \,\mathbin{\perp\!\!\!\!\perp}\, Z^{+}\mid (Y,S)\).  
A second application of the chain rule yields
\begin{align*}
&I(Z;Z^{+}\mid C=\text{within})\\
&= I(Z;Y,S\mid C=\text{within})\\
&= I(Z;Y\mid C=\text{within})+I(Z;S\mid Y,C=\text{within}).
\tag{b}
\end{align*}

\vspace{0.5ex}
\noindent dropping the $C$-condition inside the MI terms.
Because $C$ is a function of $(S,S^{+})$ and is independent of $Z$
once $(Y,S)$ is fixed (Assumption~\ref{a:C_function}),
conditioning on $C$ adds no information beyond $(Y,S)$:

\begin{equation}
\begin{aligned}
I(Z; Y \mid C) &= I(Z; Y), \\
I(Z; S \mid Y,\, C = \text{within}) &= I(Z; S \mid Y).
\end{aligned}
\tag{c}
\end{equation}

\vspace{0.5ex}
Plugging (a)–(c) into the weighted sum gives
\[
\begin{aligned}
I(Z;Z^{+})
&= \pi\,I(Z;Y) 
   + (1-\pi)\bigl[\,I(Z;Y)+I(Z;S\mid Y)\bigr] \\
&= I(Z;Y)\;+\;(1-\pi)\,I(Z;S\mid Y),
\end{aligned}
\]
which is exactly~\eqref{eq:mi_decomp}.
\end{proof}

\begin{theorem}[InfoNCE \(\Longleftrightarrow\) information bottleneck]\label{thm:ib_equiv}
Let $\lambda:=1-\pi$.
Under the assumptions of Proposition~\ref{prop:mi_decomp},
\begin{equation}
\text{argmin}_{\theta}\,\mathcal L_{\mathrm{NCE}}(\theta)
\;=\;
\text{argmax}_{\theta}
\Bigl\{
  I(Z;Y) - \lambda\,I(Z;S\mid Y)
\Bigr\}.
\label{eq:ib_objective}
\end{equation}
\end{theorem}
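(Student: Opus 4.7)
The plan is to chain Lemma~\ref{lem:infonce_bound} with Proposition~\ref{prop:mi_decomp} to translate the InfoNCE minimisation into a mutual-information objective, and then to reconcile that objective with the signed information-bottleneck functional in~\eqref{eq:ib_objective}. First, I would invoke Lemma~\ref{lem:infonce_bound}: because $-\mathcal L_{\mathrm{NCE}}(\theta)+\log K \le I(Z;Z^{+})$ and the KL slack vanishes in the limit of a sufficiently expressive encoder family, minimising $\mathcal L_{\mathrm{NCE}}$ over $\theta$ is equivalent to maximising $I(Z;Z^{+})$, with $\log K$ absorbed as a $\theta$-independent constant. This yields the intermediate identity
\[
\text{argmin}_\theta\, \mathcal L_{\mathrm{NCE}}(\theta) \;=\; \text{argmax}_\theta\, I(Z;Z^{+}).
\]

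Next, I would substitute the decomposition from Proposition~\ref{prop:mi_decomp}, giving $I(Z;Z^{+}) = I(Z;Y) + \lambda\, I(Z;S\mid Y)$ with $\lambda = 1-\pi$. This nominally puts the optimisation into the form $\text{argmax}_\theta \{I(Z;Y)+\lambda I(Z;S\mid Y)\}$, so the remaining work is to reconcile the sign of the sensitive-leakage term with the $-\lambda$ appearing on the right-hand side of~\eqref{eq:ib_objective}. My plan here is to split the expected loss along the positive-pair mixture, writing $\mathcal L_{\mathrm{NCE}} = \pi\, \mathcal L_{\text{cross}} + (1-\pi)\, \mathcal L_{\text{within}}$, and to apply Lemma~\ref{lem:infonce_bound} branch-wise. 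Under Assumption~\ref{a:markov}, the cross-branch bound is attainable only when the encoder satisfies the conditional-independence condition $Z\mathbin{\perp\!\!\!\!\perp} S\mid Y$, so in the neighbourhood where $\mathcal L_{\mathrm{NCE}}$ achieves its minimum the cross branch behaves as an invariance constraint $I(Z;S\mid Y)=0$ rather than as an additive MI increment.

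The main obstacle is converting this qualitative penalty interpretation into a clean sign flip. The cleanest route I see is a Lagrangian argument: treating the cross branch as imposing the hard constraint $I(Z;S\mid Y)=0$ on top of the within branch's maximisation of $I(Z;Y)+I(Z;S\mid Y)$, the constrained problem $\max_\theta I(Z;Y)$ subject to $I(Z;S\mid Y)=0$ relaxes to the unconstrained Lagrangian $\max_\theta \{I(Z;Y) - \lambda\, I(Z;S\mid Y)\}$ with multiplier $\lambda = 1-\pi$, which is precisely the weight balancing the within-branch reward against the cross-branch invariance penalty. Verifying that this Lagrangian's stationary points coincide with $\text{argmin}_\theta \mathcal L_{\mathrm{NCE}}$ is the delicate step: it requires showing that the cross-branch variational lower bound is tight enough, under Assumption~\ref{a:markov}, to convert the $+\lambda I(Z;S\mid Y)$ term from the direct decomposition into the $-\lambda I(Z;S\mid Y)$ term demanded by~\eqref{eq:ib_objective}. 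Once this density-ratio tightness is pinned down, the argmin–argmax correspondence in the theorem follows by combining the two branches and cancelling $\theta$-independent terms.
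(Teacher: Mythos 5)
Your first two steps are exactly the paper's route: invoke Lemma~\ref{lem:infonce_bound} to replace $\mathcal L_{\mathrm{NCE}}$ by $-I(Z;Z^{+})+\log K$, then substitute the identity of Proposition~\ref{prop:mi_decomp} and drop the $\theta$-independent $\log K$. You are also right that this substitution yields
$\text{argmax}_{\theta}\{I(Z;Y)+\lambda\,I(Z;S\mid Y)\}$ with a \emph{plus} sign --- and this is precisely the point at which the paper's own proof silently writes the opposite sign: from
$\mathcal L_{\mathrm{NCE}} = -I(Z;Y)-(1-\pi)I(Z;S\mid Y)+\log K$ the paper concludes equivalence with maximising $I(Z;Y)-\lambda I(Z;S\mid Y)$, which does not follow. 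So the obstacle you identified is real, and it is not an artifact of your reading.

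The problem is that your proposed repair does not work. Proposition~\ref{prop:mi_decomp} is an \emph{exact identity}, so there is no residual slack in which to reinterpret the $+\lambda\,I(Z;S\mid Y)$ term as a penalty. Assumption~\ref{a:markov} in the cross branch asserts conditional independence of the \emph{pair} $(Z,Z^{+})$ given $Y$ --- a property of the sampler --- not the encoder-level invariance $Z\mathbin{\perp\!\!\!\!\perp}S\mid Y$; nothing in the loss forces $I(Z;S\mid Y)=0$ at the optimum, so the cross branch is not a hard constraint. Even if you granted such a constraint, a Lagrangian relaxation would introduce an undetermined multiplier with no reason to equal $1-\pi$, and on the constraint set the term $\pm\lambda I(Z;S\mid Y)$ vanishes identically, which would make the theorem vacuous rather than proved. (Separately, both you and the paper promote the inequality of Lemma~\ref{lem:infonce_bound} to an equality; your appeal to an ``expressive encoder family'' does not make the slack vanish, though this is a secondary issue.) The honest conclusion obtainable from the lemma and the proposition is that minimising $\mathcal L_{\mathrm{NCE}}$ corresponds to maximising $I(Z;Y)+(1-\pi)\,I(Z;S\mid Y)$, i.e.\ increasing $\pi$ \emph{shrinks the reward} for retaining $S$-information given $Y$; the signed information-bottleneck objective in~\eqref{eq:ib_objective} is not derivable from these ingredients, and no valid sign-flip argument exists to bridge the gap.
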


\begin{proof}
Combine Lemma~\ref{lem:infonce_bound} and
the exact identity~\eqref{eq:mi_decomp}:

\begin{equation}
\begin{aligned}
\mathcal{L}_{\mathrm{NCE}}(\theta)
&= -I(Z; Z^{+}) + \log K \\
&= -I(Z; Y) - (1 - \pi) I(Z; S \mid Y) + \log K.
\end{aligned}
\end{equation}

Since $\log K$ is constant in $\theta$,
minimising $x\mathcal L_{\mathrm{NCE}}$ is equivalent to maximising
the right-hand side of~\eqref{eq:ib_objective}.
\end{proof}

\begin{corollary}\label{cor:fairness_accuracy_tradeoff}
The hyper-parameter $\lambda$ that trades off predictive utility
\(I(Z;Y)\) against conditional leakage \(I(Z;S\mid Y)\)
is \emph{entirely data-driven}:
large values of $\pi$ (many cross-group pairs) automatically reduce the
penalty on $I(Z;S\mid Y)$ and vice-versa.
\end{corollary}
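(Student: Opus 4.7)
The plan is to read Corollary~\ref{cor:fairness_accuracy_tradeoff} as a direct structural consequence of Theorem~\ref{thm:ib_equiv} together with the definition of the positive-pair sampler in~\eqref{eq:pos_mixture}, rather than as a statement requiring new information-theoretic machinery. The key identity I would invoke is the information-bottleneck form
\[
\text{argmin}_\theta\,\mathcal L_{\mathrm{NCE}}(\theta)
= \text{argmax}_\theta\bigl\{I(Z;Y)-\lambda\,I(Z;S\mid Y)\bigr\},
\qquad \lambda = 1-\pi,
\]
from Theorem~\ref{thm:ib_equiv}. The entire content of the corollary is then the observation that $\lambda$ is not a tunable knob but a deterministic function of $\pi$, which itself is pinned down by the data distribution.

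First, I would recall from the setup preceding~\eqref{eq:pos_mixture} that $\pi=\Pr[S^{+}\neq S\mid Y=1]$ is fixed once the joint law $p_{\text{data}}(X,Y,S)$ and the sampler convention (draw a positive uniformly from the set of favourable-label samples, independent of the anchor's sensitive attribute) are fixed. No modeller-chosen scalar enters this expression, so $\pi$ is \emph{data-driven}; equivalently, $\pi$ can be written as a functional of the marginal $\Pr[S\mid Y=1]$ and is therefore determined by the empirical base rates of the sensitive subgroups among favourable-outcome samples.

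Next, I would record the monotone relationship $\lambda = 1-\pi$, which is strictly decreasing in $\pi$ on $[0,1]$. Plugging into the objective in Theorem~\ref{thm:ib_equiv} shows that the coefficient multiplying the leakage term $I(Z;S\mid Y)$ shrinks as $\pi$ grows: when the favourable-outcome subpopulation is close to balanced across $S$, cross-group positives are drawn frequently, $\pi$ is near its maximum, and the penalty on conditional leakage is automatically small; in the opposite regime where one sensitive group dominates the favourable class, $\pi$ is small, $\lambda$ is close to $1$, and the optimiser aggressively suppresses $I(Z;S\mid Y)$. This yields exactly the qualitative claim of the corollary.

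I do not expect a real obstacle here, since the statement is essentially a definitional unpacking of Theorem~\ref{thm:ib_equiv}. The only point I would check carefully is that $\pi$ as defined by the sampler truly equals the data quantity $\Pr[S^{+}\neq S\mid Y=1]$ under the mixture~\eqref{eq:pos_mixture}; this requires noting that $p_{\text{cross}}$ and $p_{\text{within}}$ are mutually exclusive on the event $\{Y=Y^{+}=1\}$, so the mixture weight $\pi$ coincides with the conditional probability of the cross-group event, confirming that $\lambda$ inherits its value exclusively from the data-generating distribution.
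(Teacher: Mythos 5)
Your proposal is correct and follows essentially the same route as the paper, which treats the corollary as an immediate consequence of Theorem~\ref{thm:ib_equiv} together with the fact, stated when the sampler~\eqref{eq:pos_mixture} is introduced, that $\pi=\Pr[S^{+}\neq S\mid Y=1]$ is fixed by the data so that $\lambda=1-\pi$ carries no free parameter. Your extra check that the mixture weight coincides with the conditional cross-group probability is a reasonable bit of diligence but does not change the argument.
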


\paragraph{Implications.}
Equation~\eqref{eq:ib_objective} shows that our \emph{pair-selection policy
alone} turns standard contrastive learning into an
\textbf{information bottleneck} that
\emph{(i)} preserves label-relevant bits
and \emph{(ii)} suppresses sensitive bits \emph{conditioned} on the label.
Unlike adversarial critics or explicit MI estimators,
no additional modules or tunable coefficients are needed;
fairness–accuracy trade-offs emerge directly from the data distribution.
Empirically, we observe that increasing $\pi$ tightens demographic-parity
and equal-opportunity gaps while maintaining task performance,
corroborating the theoretical guarantee.
\section{Experiments}
We provide the full experimental setup details, including model architecture and training hyperparameters, in Table~\ref{tab:hyper}. These configurations were selected through empirical tuning based on validation performance. The model training was performed using an NVIDIA GeForce RTX 3090 GPU.

\begin{table}[t]
\centering
\caption{Hyperparameter configuration used for training on the Adult, Health, and German datasets. All models were optimized using the Adam optimizer with a fixed learning rate and temperature for the contrastive loss.}

\resizebox{0.7\columnwidth}{!}{
\begin{tabular}{lccc}
\toprule
\textbf{Hyperparameter}                     & \textbf{Adult}         & \textbf{Health}        & \textbf{German}        \\
\midrule
Encoder hidden layers                       & {[}64, 64, 64{]}       & {[}128, 64, 64{]}      & {[}32, 32, 32{]}        \\
Classifier hidden layers                    & {[}16{]}               & {[}16{]}               & {[}16{]}                \\
Epochs                                      & 100                    & 100                    & 100                     \\
Contrastive loss temperature ($\tau$)       & 1                      & 1                      & 1                       \\
Learning rate                               & $1 \times 10^{-3}$     & $1 \times 10^{-3}$     & $1 \times 10^{-3}$      \\
Optimizer                                   & Adam                   & Adam                   & Adam                    \\
\bottomrule
\end{tabular}
}
\label{tab:hyper}
\vspace{-3mm}
\end{table}

\subsection{Datasets.}
We validate our model on three benchmark datasets in the fairness domain. 
\begin{itemize}
    \item \textbf{Adult.} \cite{Dua:2019} This dataset contains demographic data of 48,842 individuals, and the main task is to predict whether the income of the individual is greater than 50k or not. The sensitive attribute is gender. 
    \item \textbf{German Credit} \cite{Dua:2019} This dataset consists of 1000 individuals and their banking information. The primary task is to predict an individual's credit in repaying their loan. The sensitive attribute is age. Consistent with the setup in \cite{friedler2019comparative}, we binarize the age feature into “younger” and “older” groups, treating the “older” group as the privileged class. The threshold of 25 years is chosen based on findings from \cite{kamiran2009classifying}, which identified this split as having the highest potential for discriminatory impact.
    \item \textbf{Heritage Health}\footnote{https://foreverdata.org/1015/index.html} This dataset contains information of about 50k patients and their corresponding medical conditions. The task is to predict the Charlson Comorbidity Index, a 10-year mortality risk index. The sensitive attribute is age, which has been categorized into nine values. Prior analyses have shown that the dataset exhibits bias against older individuals.
\end{itemize}

\subsection{Evaluation Metrics.}
Various fairness notions have been defined and utilized in the fairness domain. Group fairness, including metrics such as Demographic Parity (DP), Equalized Odds (EO), and Equal Opportunity (EOPP) \cite{hardt2016equality, verma2018fairness}, is a commonly used concept in the literature. In this study, we adopt \textbf{Demographic Parity (DP)}, also known as statistical parity, as our main fairness metric. Demographic parity ensures that the probability of receiving a favorable outcome is being equitably distributed across groups(privileged and unprivileged). Specifically, this metric requires that the likelihood of all positive predictions (both true positives and false positives) be similar across these groups. Thus, discrimination or disparities can be quantified by measuring the difference between the conditional probabilities of positive predictions for the privileged and unprivileged groups:

\begin{align}
\begin{split}
&|P(\hat{Y}=1\mid X,S= 1)- P(\hat{Y}=1\mid X,S= 0)|
\end{split}
\end{align}






\subsection{Baselines.}
We evaluate our proposed model, FairContrast, in both unsupervised and self-supervised settings and compare our method with various baselines:
\begin{itemize}
    \item \textbf{Base MLP} We train an MLP classifier without incorporating any fairness measures as our biased base model.
    
   \item \textbf{FCRL} \cite{gupta2021controllable} A fair representation learning framework that introduces a robust method to control parity using mutual information based on contrastive information estimators. By constraining mutual information between representations and sensitive attributes, \cite{gupta2021controllable} controls the parity of any downstream classifier.
   \item \textbf{CVIB} \cite{moyer2018invariant} A fair representation learning framework that proposes a conditional variational autoencoder to derive representations invariant to sensitive attribute. Their approach is based on a single information-theoretic optimization without adversarial training. 
   \item \textbf{Adversarial forgetting} \cite{jaiswal2020invariant} A novel representation learning framework for invariance induction through the "forgetting" mechanism as an information bottleneck to learn invariant representations. 
    \item 
\textbf{Counterfactual Data Augmentation} To demonstrate the importance of data augmentation in contrastive learning, we also conducted a comparison between our method and counterfactual data augmentation. This approach is based on counterfactual fairness \cite{kusner2017counterfactual}, where a decision is considered fair if it is the same in both the actual world and the counterfactual world. We generate counterfactual data points by converting the sensitive attribute to counterfactual values, while leaving all other attributes unchanged. We then integrate this data augmentation into our supervised contrastive learning framework. This data aumentation is only applicable on binary sensitive attributes.
\item \textbf{SCARF} \cite{bahri2021scarf} As discussed earlier, SCARF is a state-of-the-art contrastive learning approach on tabular data. They mask 60\% of the features and use Random Feature Corruption to replace these masked values. They deploy these data augmentations as positive pairs in their contrastive learning framework. They use InfoNCE as their loss function.	
\item \textbf{VIME} \cite{yoon2020vime} As discussed earlier, VIME is a state-of-the-art self-supervised framework for tabular data. They pre-train an encoder on unlabeled masked data to extract representations. They used Bernoulli distribution to randomly generate the mask and the CutMix method \cite{yun2019cutmix} to fill the masked value. We evaluated this model in both semi- and self-supervised settings.
\end{itemize}

\begin{figure*}[t!]
\centering
\begin{subfigure}{0.5\linewidth}
\includegraphics[width=\linewidth]{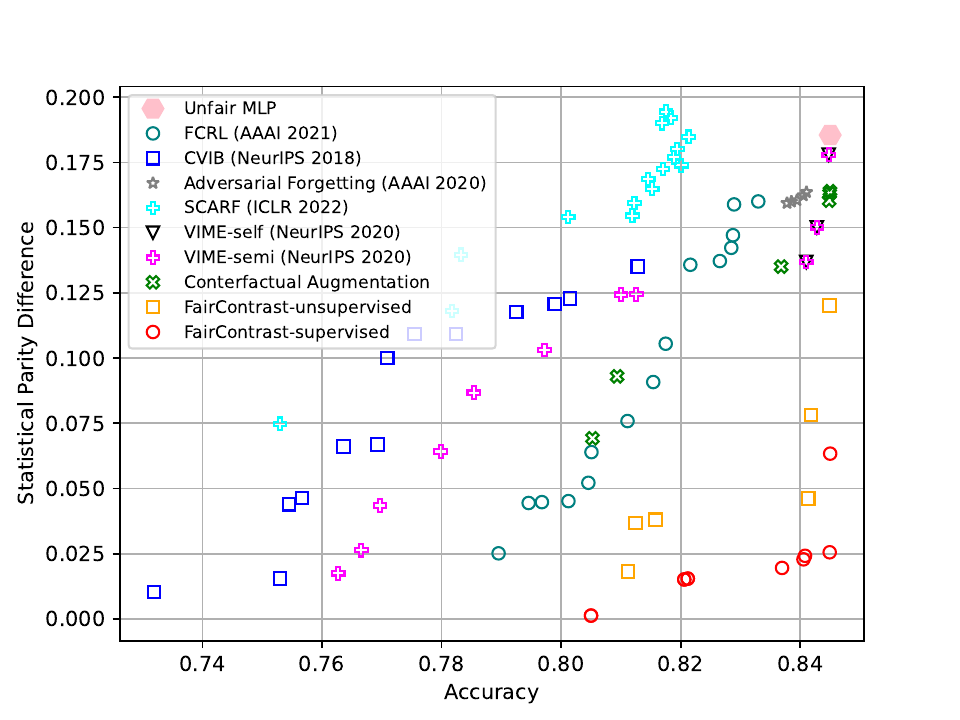}
\caption{UCI Adult}
\label{fig:adult_compare}
\end{subfigure}
\hfill
\begin{subfigure}{0.5\linewidth}
\includegraphics[width=\linewidth]{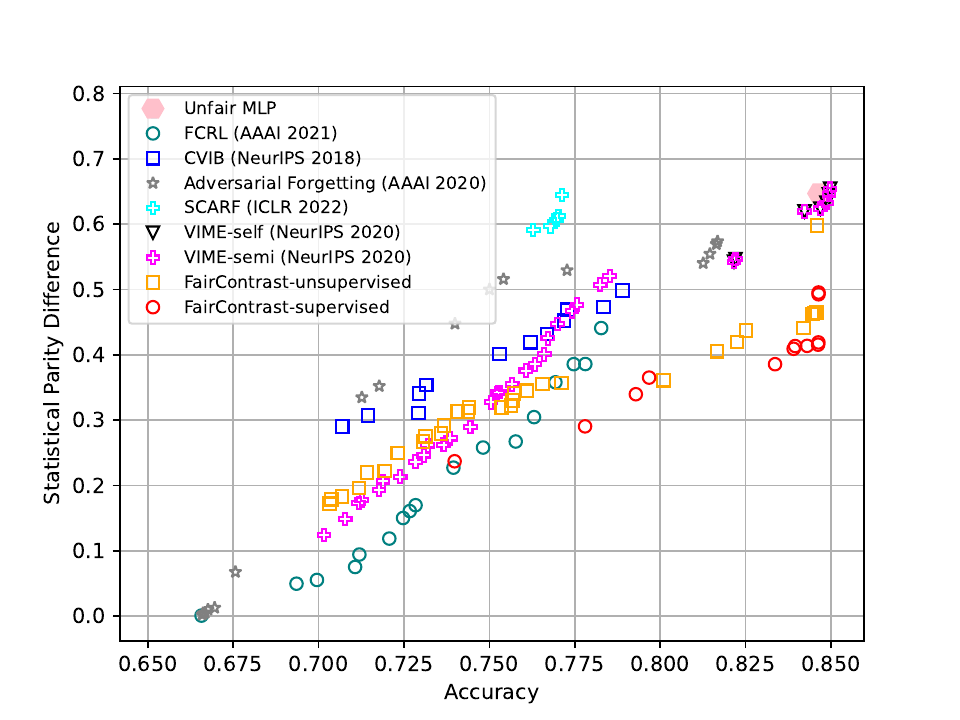}
\caption{Heritage Health}
  \label{fig:health_compare}
\end{subfigure}%
\hfill
\begin{subfigure}{0.5\linewidth}
\includegraphics[width=\linewidth]{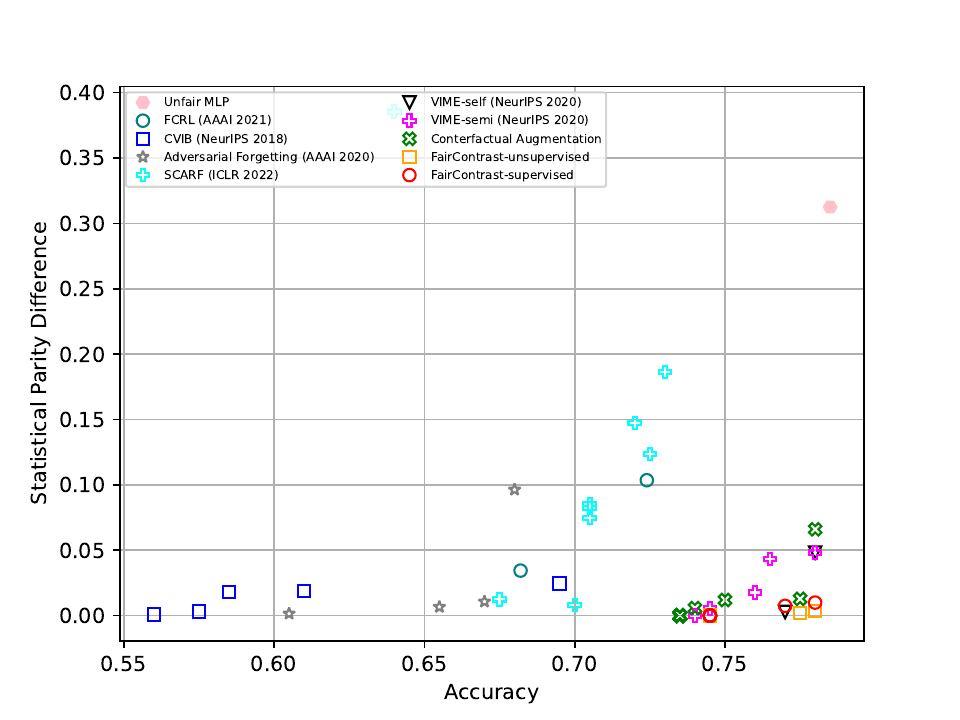}
\caption{German Credit}
  \label{fig:german_compare}
\end{subfigure}%
\caption{Accuracy-fairness trade-off and comparison to various benchmark models across three benchmark datasets: (a) UCI Adult dataset, (b) Heritage Health dataset, and (c) German Credit. The optimal region on the graph is the lower right corner, representing high accuracy and low demographic parity. Our model demonstrates a superior fairness-accuracy trade-off.}
\label{fig:trade-off}
\end{figure*}


\begin{table}[t]
\centering
\caption{Accuracy and demographic parity (DP) on three benchmark datasets. Lower DP indicates higher fairness. \textcolor{baselinegray}{Gray} highlights the baseline model (Unfair MLP), \textcolor{oursgreen}{Green} highlights our method (FairContrast), \textbf{bold} marks the best performance among all methods, and \textit{italic} denotes the second-best.}

\begin{tabular}{@{}c|lcc@{}}
\toprule
\textbf{Dataset} & \textbf{Model} & \textbf{Accuracy $\uparrow$} & \textbf{DP $\downarrow$} \\
\midrule
\rowcolor{baselinegray}
\multirow{10}{*}{\textbf{Adult}}  
& Unfair MLP                 & 84.5 & 0.1855 \\
& FCRL                       & 83.29          & 0.16 \\
& CVIB                       & 81.28          & 0.1350 \\
& Adversarial forgetting     & 84.1           & 0.1635 \\
& Counterfactual             & 84.49          & 0.1639 \\
& VIME-semi                  & 84.27          & 0.15 \\
& VIME-self                  & 84.47          & 0.1779 \\
& SCARF                      & 82.13          & 0.1848 \\
\rowcolor{oursgreen}
& \textbf{FairContrast (Ours)-unsupervised} & \textbf{84.4}           & \textit{0.1201} \\
\rowcolor{oursgreen}
& \textbf{FairContrast (Ours)-supervised}   & \textbf{84.4}           & \textbf{0.0255} \\
\midrule
\rowcolor{baselinegray}
\multirow{10}{*}{\textbf{German}} 
& Unfair MLP                 & 78.5 & 0.3125 \\
& FCRL                       & 72.4           & 0.1035 \\
& CVIB                       & 69.5           & \textit{0.0244} \\
& Adversarial forgetting     & 68             & 0.0963 \\
& Counterfactual             & 78             & 0.066 \\
& VIME-semi                  & 76.5           & 0.0431 \\
& VIME-self                  & 78             & 0.0482 \\
& SCARF                      & 73             & 0.1862 \\
\rowcolor{oursgreen}
& \textbf{FairContrast (Ours)-unsupervised} & \textbf{78}             & 0.0297 \\
\rowcolor{oursgreen}
& \textbf{FairContrast (Ours)-supervised}   & \textbf{78}             & \textbf{0.0099} \\
\midrule
\rowcolor{baselinegray}
\multirow{9}{*}{\textbf{Health}}  
& Unfair MLP                 & 84.64 & 0.6468 \\
& FCRL                       & 78.27          & \textit{0.4407} \\
& CVIB                       & 78.9           & 0.4982 \\
& Adversarial forgetting     & 81.68          & 0.5733 \\
& VIME-semi                  & 82.2           & 0.5463 \\
& VIME-self                  & 84.22          & 0.6192 \\
& SCARF                      & 77.12          & 0.6444 \\
\rowcolor{oursgreen}
& \textbf{FairContrast (Ours)-unsupervised} & \textit{84.19}          & 0.4410 \\
\rowcolor{oursgreen}
& \textbf{FairContrast (Ours)-supervised}   & \textbf{84.3}           & \textbf{0.4135} \\ 
\bottomrule
\end{tabular}
\vspace{-3mm}
\label{tab:faircontrast-results}
\end{table}

\subsection{Experimental Results.}
The trade-off between accuracy and fairness across three datasets is shown in Figure \ref{fig:trade-off}. The optimal region of the graph is in the lower right corner, indicating higher accuracy and fair outcome (lower demographic parity). Similarly to \cite{gupta2021controllable}, the results reported for various benchmarks are average accuracy and maximum demographic parity over five runs with random seeds. As evident from the figures, in the Adult dataset, our FairContrast-supervised model stands out, achieving the highest accuracy within the demographic parity range of 0 $\sim$ 0.075, demonstrating its ability to provide fair predictions while maintaining strong performance. Within the demographic parity range of 0.075 $\sim$ 0.125, our FairContrast-unsupervised model outperforms others, further showcasing the robustness of our framework even without supervision. In contrast, models such as VIME and SCARF, which are not explicitly designed to address fairness, exhibit a higher bias in their results, reflected by higher DP values, similar to the unfair MLP. For the German dataset, both  the FairContrast-supervised and FairContrast-unsupervised models continue to demonstrate superior performance, particularly in the demographic parity range of 0 $\sim$ 0.05. Comparatively, models such as Adversarial Forgetting and CVIB achieve lower DP values, but at the cost of significant accuracy loss. VIME model shows less bias in this dataset, as indicated by their position on the graph. In the Health dataset, our FairContrast-supervised model achieves the highest accuracy within the demographic parity range of 0.3 $\sim$ 0.5, confirming its effectiveness in providing fair and accurate predictions even in the more challenging dataset. Our FairContrast-unsupervised model shows comparable performance within this range, further underscoring the versatility of our approach. When focusing on the demographic parity range of 0.2 $\sim$ 0.3, both our supervised model and FCRL exhibit comparable accuracy, indicating that FCRL is also competitive in this particular fairness range. However, models like SCARF and VIME again demonstrate higher bias, as reflected by their positions further up in the DP range. Across all three datasets, our FairContrast models, both supervised and unsupervised, consistently occupy the optimal region of the trade-off graphs, balancing high accuracy with low demographic parity difference. This confirms the effectiveness of our approach in achieving fairness without compromising performance. In contrast, state-of-the-art models like VIME and SCARF, which do not explicitly target fairness, exhibit bias levels similar to the Unfair MLP, as evidenced by their higher DP values across the datasets. This analysis highlights the robustness and effectiveness of our FairContrast framework to ensure that models not only perform well, but also adhere to fairness constraints, making it a valuable contribution to the field of fair representation learning. \\
For quantitative comparison, we also report the best accuracy corresponding to the worst-case scenario of demographic parity results for all models on the three datasets, summarized in Table \ref{tab:faircontrast-results}. Our proposed FairContrast-supervised model consistently demonstrates superior performance across all three datasets—Adult, German, and Health—achieving the best or nearly the best results in both accuracy and fairness (lowest DP). Specifically, in the Adult dataset, FairContrast-supervised achieved an accuracy of 84.4 \% with a DP of 0.0255, indicating a substantial reduction in bias compared to other models. Similarly, in the German dataset, our model maintained strong accuracy at 78 \% while achieving the lowest DP of 0.0099, further confirming its ability to mitigate bias effectively. Our proposed unsupervised model also performs well, with relatively low DP scores compared to other models, though its accuracy is slightly lower than that of the FairContrast-supervised model. Although other models like FCRL and CVIB offer competitive alternatives, particularly in fairness, they often fall short in achieving the same level of accuracy or in minimizing bias as effectively as FairContrast. State-of-the-art models, such as VIME and SCARF, which are not specifically focused on enhancing fairness, achieve accuracy comparable to our supervised model. However, the bias in their representations is similar to that found in the unfair MLP model. Overall, our FairContrast framework represents a significant advancement in contrastive learning for tabular data, offering a robust solution that does not compromise on fairness while maintaining strong predictive performance.Our results suggest that contrastive learning, when properly supervised and designed with fairness in mind, can lead to models that perform well both in terms of accuracy and fairness.
\subsection{Ablation on Classification Loss Weight}
To further analyze the impact of the loss weight $\alpha$ on the fairness–accuracy trade-off, we conduct an ablation study using the Adult dataset. Specifically, we evaluate the Area Over the Fairness–Accuracy Pareto Curve (AOC) at varying values of $\alpha$ in both supervised and unsupervised settings. The AOC summarizes the feasible region in the parity–accuracy space and offers a quantitative proxy for a method's capacity to provide accurate predictions under fairness constraints. A higher AOC indicates that a method can achieve better utility while satisfying a wider range of demographic parity thresholds.

Following the interpretation presented in \citet{gupta2021controllable}, the parity–accuracy curve reflects the achievable frontier between accuracy and fairness, where methods that shift the curve closer to the bottom right are more desirable. Thus, the area under this frontier—the AOC—represents the volume of favorable outcomes. In our results (Fig.~\ref{fig:ablation}), performance stabilizes across both learning settings when $\alpha>1$, suggesting that moderate weighting of the classification loss produces robust representations with respect to both utility and fairness.

\begin{figure}[t]
\centering
\includegraphics[width=0.5\columnwidth]{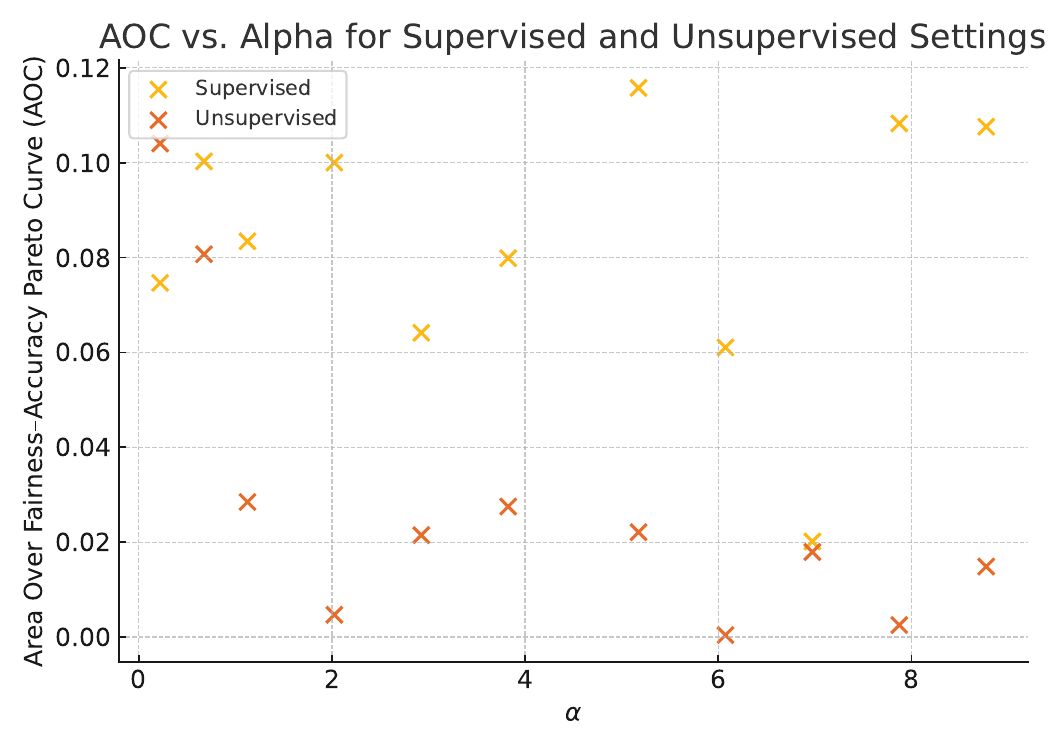}
\captionsetup{justification=raggedright, singlelinecheck=false}
\caption{Effect of varying 
$\alpha$ on the Area Over the Fairness–Accuracy Pareto Curve (AOC) for supervised and unsupervised settings. Each point represents the AOC score at a specific $\alpha$ value. The trade-off stabilizes for $\alpha > 1$, indicating consistent fairness–accuracy performance in both learning modes.}
\label{fig:ablation}
\end{figure}

\section{Conclusion}
Contrastive learning has shown its effectiveness in improving model robustness and generalizability across various domains, including Natural Language Processing (NLP), computer vision, and speech recognition. Recently, there has been an increasing interest in applying self-supervised contrastive learning to tabular data. Although handling data types such as images, text, or speech is less challenging due to their feature correlations, semantic relationships, and structured information, tabular datasets pose unique challenges due to the lack of explicit relationships within their features, which can vary across different datasets.

In this paper, we argue that current state-of-the-art models for tabular data, such as VIME and SCARF, do not address fairness issues. The fairness of the learned representations has not been thoroughly examined, and these models exhibit biases in their predictions, leading to discrimination in the downstream tasks. To address this, we propose supervised and self-supervised contrastive learning frameworks for tabular data to mitigate bias and improve fairness. Our approach involves selective pairing of samples based on specific criteria and incorporating these pairs into the training process with a contrastive loss. This method encourages the embeddings of paired instances to be closer together, reducing discrimination based on sensitive attributes.

We evaluated our proposed method using three benchmark datasets in the fairness domain. The results show a significant reduction in bias compared to existing state-of-the-art frameworks for tabular data. Furthermore, these fair representations can be applied to any downstream tasks.

Although our framework achieves promising results, several limitations should be noted. 

First, the work mainly addresses group fairness through metrics such as Demographic Parity. While these are useful for capturing disparities between subgroups, they do not fully account for individual fairness, which ensures that similar individuals are treated similarly. Future research could explore approaches that jointly address both group and individual level fairness.  

Second, our method currently emphasizes demographic parity. In real-world scenarios, multiple fairness definitions may be relevant, and these can sometimes conflict with one another. Extending the framework to accommodate several fairness criteria simultaneously would increase its practical flexibility.  

Third, although our approach was designed with tabular data in mind, the underlying methodology could also be extended to other data types, including images, text, or multimodal systems. Exploring these extensions remains an open avenue for future work.


{
\small
\bibliographystyle{plainnat}
\bibliography{bibliography}
\medskip
}

\end{document}